\def\eqref#1{equation~\ref{#1}}
\def\1{\bm{1}}
\DeclareMathAlphabet{\mathsfit}{\encodingdefault}{\sfdefault}{m}{sl}
\SetMathAlphabet{\mathsfit}{bold}{\encodingdefault}{\sfdefault}{bx}{n}
\renewcommand{\eqref}[1]{Eq. ({\color{blue} \ref{#1}})}
\newtheorem{proposition}{Proposition}
\newtheorem{definition}{Definition}
\title{Transformer Dissection: A Unified Understanding of \\ Transformer's Attention via the Lens of Kernel}
\author{Yao-Hung Hubert Tsai$^1$\, Shaojie Bai$^1$\, Makoto Yamada$^{34}$\\
{\bf Louis-Philippe Morency$^2$\, Ruslan Salakhutdinov$^1$}\\
\{$^1$Machine Learning Department,$^2$Language Technology Institute\}, Carnegie Mellon University\\
$^3$Kyoto University\, $^4$RIKEN AIP\\
\{yaohungt, shaojieb, morency, rsalakhu\}@cs.cmu.edu, myamada@i.kyoto-u.ac.jp\\
\url{https://github.com/yaohungt/TransformerDissection}}
\date{}
\begin{document}
\maketitle
\begin{abstract}
  Transformer is a powerful architecture that achieves superior performance on various sequence learning tasks, including neural machine translation, language understanding, and sequence prediction. At the core of the Transformer is the attention mechanism, which concurrently processes all inputs in the streams. In this paper, we present a new formulation of attention via the lens of the kernel. To be more precise, we realize that the attention can be seen as applying kernel smoother over the inputs with the kernel scores being the similarities between inputs. This new formulation gives us a better way to understand individual components of the Transformer's attention, such as the better way to integrate the positional embedding. Another important advantage of our kernel-based formulation is that it paves the way to a larger space of composing Transformer's attention. As an example, we propose a new variant of Transformer's attention which models the input as a product of symmetric kernels. This approach achieves competitive performance to the current state of the art model with less computation. In our experiments, we empirically study different kernel construction strategies on two widely used tasks: neural machine translation and sequence prediction.
\end{abstract}

\newcolumntype{P}[1]{>{\centering\arraybackslash}p{#1}}
\section{Introduction}
Transformer~\cite{vaswani2017attention} is a relative new architecture which outperforms traditional deep learning models such as Recurrent Neural Networks (RNNs)~\cite{sutskever2014sequence} and Temporal Convolutional Networks (TCNs)~\cite{bai2018empirical} for sequence modeling tasks across neural machine translations~\cite{vaswani2017attention}, language understanding~\cite{devlin2018bert},  sequence prediction~\cite{dai2019transformer}, image generation~\cite{child2019generating}, video activity classification~\cite{wang2018non}, music generation~\cite{huang2018improved}, and multimodal sentiment analysis~\cite{tsai2019multimodal}. 
Instead of performing recurrence (e.g., RNN) or convolution (e.g., TCN) over the sequences, Transformer is a feed-forward model that concurrently processes the entire sequence. At the core of the Transformer is its attention mechanism, which is proposed to integrate the dependencies between the inputs. There are up to three types of attention within the full Transformer model as exemplified with neural machine translation application~\cite{vaswani2017attention}: 1) Encoder self-attention considers the source sentence as input, generating a sequence of encoded representations, where each encoded token has a global dependency with other tokens in the input sequence. 2) Decoder self-attention considers the target sentence (e.g., predicted target sequence for translation) as input, generating a sequence of decoded representations\footnote{The generated sequence can be regarded as a translated sequence (i.e., translating from the encoded sequence), where each generated token depends on all tokens in the encoded sequence.}, where each decoded token depends on previous decoded tokens. 3) Decoder-encoder attention considers both encoded and decoded sequences, generating a sequence with the same length as the decoded sequence. It should be noted that some applications has only the decoder self-attention such as sequence prediction~\cite{dai2019transformer}. In all cases, the Transformer's attentions follow the same general mechanism.

At the high level, the attention can be seen as a weighted combination of the input sequence, where the weights are determined by the similarities between elements of the input sequence. We note that this operation is order-agnostic to the permutation in the input sequence (order is encoded with extra positional embedding~\cite{vaswani2017attention,shaw2018self,dai2019transformer}). The above observation inspires us to connect Transformer's attention to kernel learning~\cite{scholkopf2002learning}: they both concurrently and order-agnostically process all inputs by calculating the similarity between the inputs. Therefore, in the paper, we present a new formulation for Transformer's attention via the lens of kernel. To be more precise, the new formulation can be interpreted as a kernel smoother~\cite{wasserman2006all} over the inputs in a sequence, where the kernel measures how similar two different inputs are. The main advantage of connecting attention to kernel is that it opens up a new family of attention mechanisms that can relate to the well-established literature in kernel learning~\cite{scholkopf2002learning}. As a result, we develop a new variant of attention which simply considers a product of symmetric kernels when modeling non-positional and positional embedding. 

Furthermore, our proposed formulation highlights naturally the main components of Transformer's attention, enabling a better understanding of this mechanism: recent variants of Transformers~\cite{shaw2018self,huang2018music,dai2019transformer,child2019generating,lee2018set,wang2018non,tsai2019multimodal} can be expressed through these individual components. Among all the components, we argue that the most important one is the construction of the kernel function. We empirically study multiple kernel forms and the ways to integrate positional embedding in neural machine translation (NMT) using IWSLT'14 German-English (De-En) dataset~\cite{edunov2017classical} and sequence prediction (SP) using WikiText-103 dataset~\cite{merity2016pointer}.
\section{Attention}
\label{sec:method}
This section aims at providing an understanding of attention in Transformer via the lens of kernel. The inspiration for connecting the kernel~\cite{scholkopf2002learning} and attention instantiates from the observation: both operations concurrently processes all inputs and calculate the similarity between the inputs.
We first introduce the background (i.e., the original formulation) of attention and then provide a new reformulation within the class of kernel smoothers~\cite{wasserman2006all}. Next, we show that this new formulation allows us to explore new family of attention while at the same time offering a framework to categorize previous attention variants~\cite{vaswani2017attention,shaw2018self,huang2018music,dai2019transformer,child2019generating,lee2018set,wang2018non,tsai2019multimodal}.
Last, we present a new form of attention, which requires fewer parameters and empirically reaches competitive performance as the state-of-the-art models.

For notation, we use lowercase representing a vector (e.g., $x$), bold lowercase representing a matrix (e.g., $\mathbf{x}$), calligraphy letter denoting a space (e.g., $\mathcal{X}$), and $S$ denoting a set. To relate the notations in sequence to sequence learning~\cite{vaswani2017attention}, $x$ represents a specific element of a sequence, $\mathbf{x} = [x_1, x_2, \cdots, x_T]$ denotes a sequence of features, $S_{\mathbf{x}}=\{x_{1}, x_{2}, \cdots, x_{T}\}$ represents the set with its elements being the features in sequence $\mathbf{x}$, and we refer the space of set $S_{\mathbf{x}}$ as $\mathcal{S}$. 

\subsection{Technical Background}
\label{subsec:back}
Unlike recurrent computation~\cite{sutskever2014sequence} (i.e., RNNs) and temporal convolutional computation~\cite{bai2018empirical} (i.e., TCNs), Transformer's attention is an {\em order-agnostic} operation given the order in the inputs~\cite{vaswani2017attention}. Hence, in the presentation of the paper, we consider the inputs as a set instead of a sequence. When viewing sequence as a set, we lose the temporal (positional) information in inputs which is often crucial for sequence modeling~\cite{sutskever2014sequence}. As a result, Transformer~\cite{vaswani2017attention} introduced positional embedding to indicate the positional relation for the inputs. Formally, a sequence $\mathbf{x} = [x_1, x_2, \cdots , x_T]$ defines each element as $x_i = (f_i, t_i)$ with $f_i \in \mathcal{F}$ being the non-temporal feature at time $i$ and $t_{i} \in \mathcal{T}$ as an temporal feature (or we called it positional embedding). Note that $f_i$ can be the word representation (in neural machine translation~\cite{vaswani2017attention}), a frame in a video (in video activity recognition~\cite{wang2018non}), or a music unit (in music generation~\cite{huang2018music}). $t_i$ can be a mixture of sine and cosine functions~\cite{vaswani2017attention} or parameters that can be learned during back-propagation~\cite{dai2019transformer,ott2019fairseq}. The feature vector are defined over a joint space $\mathcal{X} := (\mathcal{F}\times \mathcal{T}$). The resulting permutation-invariant set is: $S_\mathbf{x}=\{x_1, x_2, \cdots, x_T\} = \{(f_1, t_1), (f_2, t_2), \cdots, (f_T, t_T)\}$.

Followed the definition by~\citet{vaswani2017attention}, we use queries(q)/keys(k)/values(v) to represent the inputs for the attention. To be more precise, $x_{\{q/k/v\}}$ is used for denoting a query/key/value data in the query/key/value sequence $\mathbf{x}_{\{q/k/v\}}$ ($x_{\{q/k/v\}} \in S_{\mathbf{x}_\{q/k/v\}}$) with $S_{\mathbf{x}_\{q/k/v\}}$ being its set representation. We note that the input sequences are the same ($\mathbf{x}_q=\mathbf{x}_k$) for self-attention and are different ($\mathbf{x}_q$ from decoder and $\mathbf{x}_k$ from encoder) for encoder-decoder attention. 

Given the introduced notation, the attention mechanism in original Transformer~\cite{vaswani2017attention} can be presented as:
\begin{equation}
\begin{split}
&\mathrm{Attention}(x_q \,\,;\,\, S_{\mathbf{x}_k}) \\
= \,&\mathrm{softmax}\left(\frac{x_qW_q(\mathbf{x}_kW_k)^\top}{\sqrt{d_k}} \right)\mathbf{x}_kW_v 
\end{split}
\label{eq:attn}
\end{equation}
with $x_q = f_q + t_q$, $\mathbf{x}_k = \mathbf{f}_k + \mathbf{t}_k$, $W_{q/k/v}$ being the weight, and $d_k$ being the feature dimension of $\mathbf{x}_kW_k$. Decoder self-attention further introduces a mask to block the visibility of elements in $S_{\mathbf{x}_k}$ to $x_q$. Particularly, decoder self-attention considers the decoded sequence as inputs ($\mathbf{x}_k = \mathbf{x}_q$), where the decoded token at time $t$ is not allowed to access the future decoded tokens (i.e., tokens decoded at time greater than $t$). On the contrary, encoder self-attention and decoder-encoder attention consider no additional mask to~\eqref{eq:attn}.

Recent work~\cite{shaw2018self,dai2019transformer,huang2018music,child2019generating,lee2018set,parmar2018image,tsai2019multimodal} proposed modifications to the Transformer for the purpose of better modeling inputs positional relation~\cite{shaw2018self,huang2018music,dai2019transformer}, appending additional keys in $S_{\mathbf{x}_k}$~\cite{dai2019transformer}, modifying the mask applied to~\eqref{eq:attn}~\cite{child2019generating}, or applying to distinct feature types~\cite{lee2018set,parmar2018image,tsai2019multimodal}. These works adopt different designs of attention as comparing to the original form (\eqref{eq:attn}). In our paper, we aim at providing an unified view via the lens of kernel.

\subsection{Reformulation via the Lens of Kernel}
We now provide the intuition to reformulate~\eqref{eq:attn} via the lens of kernel. First, the softmax function can be realized as a probability function for $x_q$ observing the keys $\{x_k\}$s in $S_{\mathbf{x}_k}$ ($S_{\mathbf{x}_k}$ is the set representation of sequence $\mathbf{x}_k$). The probability is determined by the dot product between $x_q$ and $x_k$ with additional mappings $W_q/W_k$ and scaling by $d_k$, which we note the dot-product operation is an instance of kernel function. We also introduce a set filtering function $M(x_q, S_{\mathbf{x}_k}): \mathcal{X} \times \mathcal{S} \rightarrow \mathcal{S}$ which returns a set with its elements that operate with (or are connected/visible to) $x_q$. The filtering function $M(\cdot, \cdot)$ plays as the role of the mask in decoder self-attention~\cite{vaswani2017attention}. Putting these altogether, we re-represent~\eqref{eq:attn} into the following definition.
\begin{definition}
Given a non-negative kernel function $k(\cdot, \cdot): \mathcal{X} \times \mathcal{X} \rightarrow \mathbb{R}^{+}$, a set filtering function $M(\cdot, \cdot): \mathcal{X} \times \mathcal{S} \rightarrow \mathcal{S}$, and a value function $v(\cdot): \mathcal{X} \rightarrow \mathcal{Y}$, the Attention function taking the input of a query feature $x_q \in \mathcal{X}$ is defined as 
\begin{equation}
\begin{split} &\mathrm{Attention}\Big(x_q \,\,;\,\, M(x_q, S_{\mathbf{x}_k})\Big)\\
= \,\,& \sum_{{x_k} \in M(x_q, S_{\mathbf{x}_k})} \frac{k(x_q, x_k)}{\sum_{{x_k}' \in M(x_q, S_{\mathbf{x}_k})}k(x_q, {x_k}')} v(x_k).
\end{split}
\label{eq:attn_set}
\end{equation}
\label{def:attn}
\end{definition}

The Definition~\ref{def:attn} is a class of linear smoothers~\cite{wasserman2006all} with kernel smoothing:
\begin{equation*}
\begin{split} & \sum_{{x_k} \in M(x_q, S_{\mathbf{x}_k})} \frac{k(x_q, x_k)}{\sum_{{x_k}' \in M(x_q, S_{\mathbf{x}_k})}k(x_q, {x_k}')} v(x_k) 
\\
& =\,\,\mathbb{E}_{p(x_k |x_q)}\big[v(x_k)\big] ,
\end{split}
\end{equation*}
where $v(x_k)$ outputs the ``values'' and $p(x_k |x_q) = \frac{k(x_q, x_k)}{\sum_{{x_k}' \in M(x_q, S_{\mathbf{x}_k})}k(x_q, {x_k}')}$ is a probability function depends on $k$ and $N$ when $k(\cdot, \cdot)$ is always positive. In the prior work~\cite{vaswani2017attention}, $k(x_q, x_k) = \mathrm{exp}\left(\langle x_qW_q, x_kW_k \rangle / \sqrt{d_k}\right)$ and $v(x_k) = x_kW_v$. Note that the kernel form $k(x_q, x_k)$ in the original Transformer~\cite{vaswani2017attention} is a asymmetric exponential kernel with additional mapping $W_q$ and $W_k$~\cite{wilson2016deep,li2017mmd}\footnote{We note that rigorous definition of kernel function~\cite{scholkopf2002learning} requires the kernel to be semi-positive definite and symmetric. While in the paper, the discussion on kernel allows it to be non-semi-positive definite and asymmetric. In Section~\ref{sec:exp}, we will examine the kernels which are semi-positive and symmetric.}.
%\footnote{To support the claim, using the kernel trick~\cite{scholkopf2002learning}, we show how this expression corresponds to a non-parametric estimator for sequence prediction with the derivation from Generalized Representer Theorem~\cite{scholkopf2001generalized} along with amortized optimization~\cite{tsai2017discovering,choi2019meta,shu2018amortized}. As a paper focusing on empirical study, we move the this derivation to Supplementary.}.  

The new formulation defines a larger space for composing attention by manipulating its individual components, and at the same time it is able to categorize different variants of attention in prior work~\cite{shaw2018self,huang2018music,dai2019transformer,child2019generating,lee2018set,wang2018non,tsai2019multimodal}. In the following, we study these components by dissecting~\eqref{eq:attn_set} into: 1) kernel feature space $\mathcal{X}$, 2) kernel construction $k(\cdot, \cdot)$, 3) value function $v(\cdot)$, and 4) set filtering function $M(\cdot, \cdot)$.

\subsubsection{Kernel Feature Space $\mathcal{X}$}
In~\eqref{eq:attn_set}, to construct a kernel on $\mathcal{X}$, the first thing is to identify the kernel feature space $\mathcal{X}$. In addition to modeling sequences like word sentences~\cite{vaswani2017attention} or music signals~\cite{huang2018music}, the Transformer can also be applied to images~\cite{parmar2018image}, sets~\cite{lee2018set}, and multimodal sequences~\cite{tsai2019multimodal}. Due to distinct data types, these applications admit various kernel feature space:

\vspace{1mm}
\noindent {\em (i) Sequence Transformer~\cite{vaswani2017attention,dai2019transformer}:} 
$$\mathcal{X} := (\mathcal{F} \times \mathcal{T})$$ 
    \\with $\mathcal{F}$ being non-positional feature space and $\mathcal{T}$ being the positional embedding space of the position in the sequence.

\vspace{1mm}    
\noindent {\em (ii) Image Transformer~\cite{parmar2018image}:} 
$$
        \mathcal{X} := (\mathcal{F} \times \mathcal{H} \times \mathcal{W}) 
    $$ 
    \\with $\mathcal{F}$ being non-positional feature space, $\mathcal{H}$ being the positional space of the height in an image, and $\mathcal{W}$ being the positional space of the width in an image.

\vspace{1mm}    
\noindent {\em (iii) Set Transformer~\cite{lee2018set} and Non-Local Neural Networks~\cite{wang2018non}:} $$
        \mathcal{X} := (\mathcal{F})
    $$ with no any positional information present.

\vspace{1mm}
\noindent {\em (iv) Multimodal Transformer~\cite{tsai2019multimodal}:} $$
        \mathcal{X} := (\mathcal{F}^\ell\times \mathcal{F}^v\times \mathcal{F}^a\times \mathcal{T})
    $$
    with $\mathcal{F}^\ell$ representing the language feature space, $\mathcal{F}^v$ representing the vision feature space, $\mathcal{F}^a$ representing the audio feature space, and $\mathcal{T}$ representing the temporal indicator space.

For the rest of the paper, we will focus on the setting for sequence Transformer $\mathcal{X} = (\mathcal{F}\times \mathcal{T})$ and discuss the kernel construction on it.

\subsubsection{Kernel Construction and the Role of Positional Embedding $k(\cdot, \cdot)$}
\label{subsubsec:pos}
The kernel construction on $\mathcal{X} = (\mathcal{F}\times \mathcal{T})$ has distinct design in variants of Transformers~\cite{vaswani2017attention,dai2019transformer,huang2018music,shaw2018self,child2019generating}. Since now the kernel feature space considers a joint space, we will first discuss the kernel construction on $\mathcal{F}$ (the non-positional feature space) and then discuss how different variants integrate the positional embedding (with the positional feature space $\mathcal{T}$) into the kernel.

\vspace{2mm}
\hspace{-4mm}{\bf Kernel construction on $\mathcal{F}$.}
All the work considered the scaled asymmetric exponential kernel with the mapping $W_q$ and $W_k$~\cite{wilson2016deep,li2017mmd} for non-positional features $f_q$ and $f_k$:
\begin{equation}
    k_{\mathrm{exp}}(f_q, f_k) = \mathrm{exp}\left(\frac{\langle f_qW_q, f_kW_k \rangle }{\sqrt{d_k}}\right).
\label{eq:fkernel}
\end{equation}
Note that the usage of asymmetric kernel is also commonly used in various machine learning tasks~\cite{yilmaz2007object,tsuda1999support,kulis2011you}, where they observed the kernel form can be flexible and even non-valid (i.e., a kernel that is not symmetric and positive semi-definite). In Section~\ref{sec:exp}, we show that symmetric design of the kernel has similar performance for various sequence learning tasks, and we also examine different kernel choices (i.e., linear, polynomial, and rbf kernel).

\vspace{2mm}
\hspace{-4mm}{\bf Kernel construction on $\mathcal{X} = (\mathcal{F}\times \mathcal{T})$.}
The designs for integrating the positional embedding $t_q$ and $t_k$ are listed in the following.

\vspace{1mm}
\noindent {\em (i) Absolute Positional Embedding~\cite{vaswani2017attention,dai2019transformer,ott2019fairseq}:}
    For the original Transformer~\cite{vaswani2017attention}, each $t_i$ is represented by a vector with each dimension being sine or cosine functions. For learned positional embedding~\cite{dai2019transformer,ott2019fairseq}, each $t_i$ is a learned parameter and is fixed for the same position for different sequences. These works defines the feature space as the direct sum of its temporal and non-temporal space: $\mathcal{X}=\mathcal{F}\bigoplus \mathcal{T}$. Via the lens of kernel, the kernel similarity is defined as 
    \begin{equation}
    k\Big(x_q, x_k\Big) := k_{\mathrm{exp}} \Big(f_q+t_q, f_k+ t_k\Big).
    \label{eq:k1}
    \end{equation}
    
\vspace{1mm}
\noindent {\em (ii) Relative Positional Embedding in Transformer-XL~\cite{dai2019transformer}:} $t$ represents the indicator of the position in the sequence, and the kernel is chosen to be asymmetric of mixing sine and cosine functions:
    \begin{equation}
    k\Big(x_q, x_k\Big) := k_{\mathrm{exp}}\Big(f_q, f_k\Big)\cdot k_{f_q}\Big(t_q, t_k\Big)
    \label{eq:k2}
    \end{equation}
    with $k_{f_q}\Big(t_q, t_k\Big)$ being an asymmetric kernel with coefficients inferred by $f_q$: $\mathrm{log}\,k_{f_q}\Big(t_q, t_k\Big) =$
    $\sum_{p = 0}^{\left \lfloor d_k/2 \right \rfloor -1} c_{2p}\,\sin(\frac{t_q- t_k}{10000^{\frac{2 p}{512}}}) + c_{2p+1}\,\cos(\frac{t_q- t_k}{10000^{\frac{2 p}{512}}})$
    with $[c_0, \cdots, c_{d_k-1}] = f_q W_qW_R$    where $W_R$ is an learned weight matrix. We refer readers to~\citet{dai2019transformer} for more details.

\vspace{1mm}
\noindent {\em (iii) Relative Positional Embedding of~\citet{shaw2018self} and Music Transformer~\cite{huang2018music}:}
  $t_\cdot$ represents the indicator of the position in the sequence, and the kernel is modified to be indexed by a look-up table:
    \begin{equation}
    k\Big(x_q, x_k\Big) :=  L_{t_q-t_k,f_q} \cdot k_{\mathrm{exp}}\Big(f_q, f_k\Big),
    \label{eq:k3}
    \end{equation}
    where $ L_{t_q-t_k,f_q} = \mathrm{exp}(f_q W_q a_{t_q-t_k})$ with $a_\cdot$ being a learnable matrix having matrix width to be the length of the sequence. We refer readers to~\citet{shaw2018self} for more details.

\citet{dai2019transformer} showed that the way to integrate positional embedding is better through~\eqref{eq:k2} than through~\eqref{eq:k3} and is better through~\eqref{eq:k3} than through~\eqref{eq:k1}. We argue the reason is that if viewing $f_i$ and $t_i$ as two distinct spaces \Big($\mathcal{X}:=(\mathcal{F}\times \mathcal{T})$\Big), the direct sum $x_i= f_i + t_i$ may not be optimal when considering the kernel score between $x_q$ and $x_k$. In contrast,~\eqref{eq:k2} represents the kernel as a product of two kernels (one for $f_i$ and another for $t_i$), which is able to capture the similarities for both temporal and non-temporal components. 

\subsubsection{Value Function $v(\cdot)$}
\label{subsubsec:value}
The current Transformers consider two different value function construction:

\vspace{1mm}
\noindent {\em (i) Original Transformer~\cite{vaswani2017attention} and Sparse Transformer~\cite{child2019generating}:}
    \begin{equation}
    v(x_k) = v((f_k, t_k)) := (f_k + t_k)W_v.
    \label{eq:v1}
    \end{equation}
    
\vspace{1mm}
\noindent {\em (ii) Transformer-XL~\cite{dai2019transformer}, Music Transformer~\cite{huang2018music}, Self-Attention with Relative Positional Embedding~\cite{shaw2018self}:}
    \begin{equation}
    v(x_k) = v((f_k, t_k)) := f_kW_v.
    \label{eq:v2}
    \end{equation}
    
Compared~\eqref{eq:v1} to~\eqref{eq:v2},~\eqref{eq:v1} takes the positional embedding into account for constructing the value function. In Section~\ref{sec:exp}, we empirically observe that constructing value function with~\eqref{eq:v2} constantly outperforms the construction with~\eqref{eq:v1}, which suggests that we do not need positional embedding for value function.

\subsubsection{Set Filtering Function $M(\cdot, \cdot)$}
In~\eqref{eq:attn_set}, the returned set by the set filtering function $M(x_q, S_{\mathbf{x}_k})$ defines how many keys and which keys are operating with $x_q$. In the following, we itemize the corresponding designs for the variants in Transformers:

\noindent {\em (i) Encoder Self-Attention in original Transformer~\cite{vaswani2017attention}:} For each query $x_q$ in the encoded sequence, $M(x_q, S_{\mathbf{x}_k}) = S_{\mathbf{x}_k}$ contains the keys being all the tokens in the encoded sequence. Note that encoder self-attention considers $\mathbf{x}_q = \mathbf{x}_k$ with $\mathbf{x}_q$ being the encoded sequence.

\vspace{1mm}
\noindent {\em (ii) Encoder-Decoder Attention in original Transformer~\cite{vaswani2017attention}:} For each query $x_q$ in decoded sequence, $M(x_q, S_{\mathbf{x}_k}) = S_{\mathbf{x}_k}$ contains the keys being all the tokens in the encoded sequence. Note that encode-decoder attention considers $\mathbf{x}_q \neq \mathbf{x}_k$ with $\mathbf{x}_q$ being the decoded sequence and $\mathbf{x}_k$ being the encoded sequence.

\vspace{1mm}
\noindent {\em (iii) Decoder Self-Attention in original Transformer~\cite{vaswani2017attention}:} For each query $x_q$ in the decoded sequence, $M(x_q, S_{\mathbf{x}_k})$ returns a subset of $S_{\mathbf{x}_k}$ ($M(x_q, S_{\mathbf{x}_k}) \subset S_{\mathbf{x}_k}$). Note that decoder self-attention considers $\mathbf{x}_q = \mathbf{x}_k$ with $\mathbf{x}_q$ being the decoded sequence. Since the decoded sequence is the output for previous timestep, the query at position $i$ can only observe the keys being the tokens that are decoded with position $<i$. For convenience, let us define $S_1$ as the set returned by original Transformer~\cite{vaswani2017attention} from $M(x_q, S_{\mathbf{x}_k})$, which we will use it later.

\vspace{1mm}
\noindent {\em (iv) Decoder Self-Attention in Transformer-XL~\cite{dai2019transformer}:} For each query $x_q$ in the decoded sequence, $M(x_q, S_{\mathbf{x}_k})$ returns a set containing $S_1$ and additional memories ($M(x_q, S_{\mathbf{x}_k}) = S_1 + S_{mem}, M(x_q, S_{\mathbf{x}_k}) \supset S_1$). $S_{mem}$ refers to additional memories.

\vspace{1mm}
\noindent {\em (v) Decoder Self-Attention in Sparse Transformer~\cite{child2019generating}:} For each query $x_q$ in the decoded sentence, $M(x_q, S_{\mathbf{x}_k})$ returns a subset of $S_1$ ($M(x_q, S_{\mathbf{x}_k}) \subset S_1$).

To compare the differences for various designs, we see the computation time is inversely proportional to the number of elements in $M(x_q, S_{\mathbf{x}_k})$. For performance-wise comparisons, Transformer-XL~\cite{dai2019transformer} showed that, the additional memories in $M(x_q, S_{\mathbf{x}_k})$ are able to capture longer-term dependency than the original Transformer~\cite{vaswani2017attention} and hence results in better performance. Sparse Transformer~\cite{child2019generating} showed that although having much fewer elements in $M(x_q, S_{\mathbf{x}_k})$, if the elements are carefully chosen, the attention can still reach the same performance as Transformer-XL~\cite{dai2019transformer}. 

\begin{table*}[t!]
\centering
\caption{Incorporating Positional Embedding (PE). NMT stands for neural machine translation on IWSLT’14 De-En dataset~\cite{edunov2017classical} and SP stands for sequence prediction on WikiText-103 dataset~\cite{merity2016pointer}. $\uparrow$ means the upper the better and $\downarrow$ means the lower the better.}
%Study for different kernel forms with Neural Machine Translation on IWSLT’14 De-En.}
\renewcommand{\arraystretch}{1.4}
\resizebox{0.85\textwidth}{!}{
\begin{tabular}{P{50mm}P{27mm}P{40mm}P{25mm}P{26mm}}
\toprule
Approach      & PE Incorporation  & Kernel Form        & NMT (BLEU$\uparrow$)     & SP (Perplexity$\downarrow$) \\ \midrule
\citet{vaswani2017attention} (\eqref{eq:k1}) & Direct-Sum & $k_{\mathrm{exp}} \Big(f_q+t_q, f_k+ t_k\Big)$               & 33.98          & 30.97           \\[2mm]
\citet{shaw2018self} (\eqref{eq:k3})  & Look-up Table & $L_{t_q-t_k,f_q} \cdot k_{\mathrm{exp}}\Big(f_q, f_k\Big)$            & 34.12          & 27.56           \\[2mm]
\citet{dai2019transformer} (\eqref{eq:k2})   & Product Kernel & $k_{\mathrm{exp}}\Big(f_q, f_k\Big)\cdot k_{f_q}\Big(t_q, t_k\Big)$                 & 33.62          & \textbf{24.10}  \\[2mm]
Ours (\eqref{eq:k4})        & Product Kernel & $k_F\Big(f_q, f_k\Big)\cdot k_T\Big(t_q, t_k\Big)$                  & \textbf{34.71} & 24.28 \\[2mm]
\bottomrule 
\end{tabular}}
\label{tbl:1}
\end{table*}

\subsection{Exploring the Design of Attention}
\label{subsubsec:explor}
So far, we see how~\eqref{eq:attn_set} connects to the variants of Transformers. By changing the kernel construction in Section~\ref{subsubsec:pos}, we can define a larger space for composing attention. In this paper, we present a new form of attention with a kernel that is 1) valid (i.e., a kernel that is symmetric and positive semi-definite) and 2) delicate in the sense of constructing a kernel on a joint space (i.e., $\mathcal{X} = (\mathcal{F} \times \mathcal{T})$):
\begin{equation}
    \begin{split}
    &k(x_q, x_k) := k_F\Big(f_q, f_k\Big)\cdot k_T\Big(t_q, t_k\Big) \\
    &\mathrm{with}\,\,k_{F}(f_q, f_k) = \mathrm{exp}\Big(\frac{\langle  f_qW_F, f_kW_F \rangle}{\sqrt{d_k}}\Big)  \\
    &\mathrm{and} \,\,\,\, k_{T}(t_q, t_k) = \mathrm{exp}\Big(\frac{\langle  t_qW_T, t_kW_T \rangle}{\sqrt{d_k}}\Big),
    \end{split}
    \label{eq:k4}
\end{equation}
where $W_F$ and $W_T$ are weight matrices. The new form considers product of kernels with the first kernel measuring similarity between non-temporal features and the second kernel measuring similarity between temporal features. Both kernels are symmetric exponential kernel. Note that $t_i$ here is chosen as the mixture of sine and cosine functions as in the prior work~\cite{vaswani2017attention,ott2019fairseq}. In our experiment, we find it reaching competitive performance as comparing to the current state-of-the-art designs (\eqref{eq:k2} by~\citet{dai2019transformer}). We fix the size of the weight matrices $W_\cdot$ in~\eqref{eq:k4} and~\eqref{eq:k2} which means we save $33\%$ of the parameters in attention from ~\eqref{eq:k4} to~\eqref{eq:k2} (\eqref{eq:k2} has weights $W_Q/W_K/W_R$ and~\eqref{eq:k4} has weights $W_F/W_T$).

\section{Experiments}
\label{sec:exp}
By viewing the attention mechanism with~\eqref{eq:attn_set}, we aims at answering the following questions regarding the Transformer's designs:

\vspace{1mm}
\noindent {\bf Q1.} What is the suggested way for incorporating positional embedding in the kernel function?

\vspace{1mm}
\noindent {\bf Q2.} What forms of kernel are recommended to choose in the attention mechanism? Can we replace the asymmetric kernel with the symmetric version?

\vspace{1mm}
\noindent {\bf Q3.} Is there any exception that the attention mechanism is not order-agnostic with respect to inputs? If so, can we downplay the role of positional embedding?

\vspace{1mm}
\noindent {\bf Q4.} Is positional embedding required in value function?

We conduct experiments on neural machine translation (NMT) and sequence prediction (SP) tasks since these two tasks are commonly chosen for studying Transformers~\cite{vaswani2017attention,dai2019transformer}. Note that NMT has three different types of attentions (e.g., encoder self-attention, decoder-encoder attention, decoder self-attention) and SP has only one type of attention (e.g., decoder self-attention). 
For the choice of datasets, we pick IWSLT'14 German-English (De-En) dataset~\cite{edunov2017classical} for NMT and WikiText-103 dataset~\cite{merity2016pointer} for SP as suggested by Edunov {\em et al.}~\cite{edunov2017classical} and Dai {\em et al.}~\cite{dai2019transformer}.
For fairness of comparisons, we train five random initializations and report test accuracy with the highest validation score. We fix the position-wise operations in Transformer\footnote{The computation of Transformer can be categorized into position-wise and inter-positions (i.e., the attention mechanism) operations. Position-wise operations include layer normalization, residual connection, and feed-forward mapping. We refer the readers to Vaswani {\em et al.}~\cite{vaswani2017attention} for more details.} and only change the attention mechanism. Similar to prior work~\cite{vaswani2017attention,dai2019transformer}, we report BLEU score for NMT and perplexity for SP.

\begin{table*}[t!]
\centering
\caption{Kernel Types. Other than manipulating the kernel choice of the non-positional features, we fix the configuration by~\citet{vaswani2017attention} for NMT and the configuration by~\citet{dai2019transformer} for SP.}
%Study for different kernel forms with Neural Machine Translation on IWSLT’14 De-En.}
\renewcommand{\arraystretch}{1.4}
\resizebox{0.9\textwidth}{!}{
\begin{tabular}{P{20mm}P{40mm}P{33mm}P{33mm}P{33mm}P{33mm}}
\toprule
\multirow{2}{*}{Type} & \multirow{2}{*}{Kernel Form} & \multicolumn{2}{c}{NMT (BLEU$\uparrow$)} & \multicolumn{2}{c}{SP (Perplexity$\downarrow$)} \\
                      &                              & Asym. ($W_q \neq W_k$)        & Sym. ($W_q = W_k$)      & Asym. ($W_q \neq W_k$)        & Sym. ($W_q = W_k$)           \\ \midrule
Linear                & $\langle f_a W_q, f_b W_k \rangle $                            & not converge    & not converge & not converge    & not converge      \\[3mm]
Polynomial            & $\Big(\langle f_a W_q, f_b W_k \rangle \Big)^2$                            & 32.72           & 32.43        & 25.91           & 26.25             \\[3mm]
Exponential           & $ \mathrm{exp}\Big(\frac{\langle f_a W_q, f_b W_k \rangle}{\sqrt{d_k}}\Big)$                            & 33.98           & 33.78        & 24.10           & \textbf{24.01}    \\[3mm]
RBF                   & $ \mathrm{exp}\Big(-\frac{\|f_a W_q - f_b W_k\|^2}{\sqrt{d_k}}\Big)$                            & \textbf{34.26}  & 34.14        & 24.13           & 24.21 \\[3mm]
\bottomrule 
\end{tabular}}
\label{tbl:2}
\end{table*}

\subsection{Incorporating Positional Embedding}
In order to find the best way to integrate positional embedding (PE), we study different PE incorporation in the kernel function $k(\cdot, \cdot)$ in~\eqref{eq:attn_set}. Referring to Sections~\ref{subsubsec:pos} and~\ref{subsubsec:explor}, we consider four cases: 1) PE as direct sum in the feature space (see~\eqref{eq:k1}), 2) PE as a look-up table (see~\eqref{eq:k3}), 3) PE in product kernel with asymmetric kernel (see~\eqref{eq:k2}), and 4) PE in product kernel with symmetric kernel (see~\eqref{eq:k4}). We present the results in Table~\ref{tbl:1}. 

First, we see that by having PE as a look-up table, it outperforms the case with having PE as direct-sum in feature space, especially for SP task. Note that the look-up table is indexed by the relative position (i.e., $t_q - t_k$) instead of absolute position. Second, we see that PE in the product kernel proposed by Dai {\em et al.}~\cite{dai2019transformer} may not constantly outperform the other integration types (it has lower BLEU score for NMT). Our proposed product kernel reaches the best result in NMT and is competitive to the best result in SP. %Additionally, our model has $33\%$ fewer parameters as compared to the model proposed by Dai {\em et al.}~\cite{dai2019transformer}.

\begin{table*}[t!]
\centering
\caption{Order-Invariance in Attention. To save the space, we denote Encoder Self-Attention / Encoder-Decoder Attention / Decoder Self-Attention as A/B/C. Note that SP only has decoder self-attention.}
%Study for different kernel forms with Neural Machine Translation on IWSLT’14 De-En.}
\renewcommand{\arraystretch}{1.4}
\resizebox{0.9\textwidth}{!}{
\begin{tabular}{P{40mm}P{35mm}P{25mm}}
\toprule
Approach & Positional Embedding                            & NMT (BLEU$\uparrow$)  \\ \midrule
Ours (\eqref{eq:k4})         & In A/B/C                           & \textbf{34.71}      \\
Ours (\eqref{eq:k4})       & In A/B & 34.49                \\
No Positional Embedding   & none                            & 14.47     
 \\
\bottomrule 
\end{tabular}
\quad
\begin{tabular}{P{45mm}P{35mm}P{26mm}}
\toprule
Approach & Positional Embedding                            & SP (Perplexity$\downarrow$) \\ \midrule
\citet{vaswani2017attention} (\eqref{eq:k1}) & In C                              & 30.97           \\
Ours (\eqref{eq:k4}      & In C                           & \textbf{24.28}           \\
No Positional Embedding  & none   & 30.92          
 \\
\bottomrule 
\end{tabular}
}
\label{tbl:3}
\end{table*}
\begin{table*}[t!]
\centering
\caption{Positional Embedding in Value Function. }
%Study for different kernel forms with Neural Machine Translation on IWSLT’14 De-En.}
\renewcommand{\arraystretch}{1.4}
\resizebox{0.9\textwidth}{!}{
\begin{tabular}{P{50mm}P{42mm}P{32mm}P{42mm}P{32mm}}
\toprule
\multicolumn{5}{c}{I: Value Function Considering Positional Embedding (\eqref{eq:v1}) / II: Value Function Considering no Positional Embedding (\eqref{eq:v2}) }  \\ \cline{1-5}
\multirow{2}{*}{Approach} & \multicolumn{2}{c}{NMT (BLEU$\uparrow$)} & \multicolumn{2}{c}{SP (Perplexity$\downarrow$)} \\
                          & I \Big($v(x_k) := (f_k + t_k)W_V$\Big)        & II \Big($v(x_k) := f_kW_V$\Big)                 & I \Big($v(x_k) := (f_k + t_k)W_V$\Big)        & II \Big($v(x_k) := f_kW_V$\Big)                   \\ \midrule
\citet{vaswani2017attention} (\eqref{eq:k1})                       & 33.98     & 34.02              & 30.97        & 30.50                \\
\citet{shaw2018self} (\eqref{eq:k3})                     & 34.04     & 34.12              & 27.56        & 27.45                \\
\citet{dai2019transformer} (\eqref{eq:k2})                       & 33.32     & 33.62              & 24.18        & \textbf{24.10}       \\
Ours (\eqref{eq:k4})                     & 34.60     & \textbf{34.71}     & 24.42        & 24.28               
 \\
\bottomrule 
\end{tabular}}
\label{tbl:4}
\end{table*}

\subsection{Kernel Types}

To find the best kernel form in the attention mechanism, in addition to the exponential kernel (see~\eqref{eq:fkernel}), we compare different kernel forms (i.e., linear, polynomial, and rbf kernel) for the non-positional features. We also provide the results for changing asymmetric to the symmetric kernel, when forcing $W_q = W_k$, so that the resulting kernel is a valid kernel~\cite{scholkopf2002learning}. The numbers are shown in Table~\ref{tbl:2}. Note that, for fairness, other than manipulating the kernel choice of the non-positional features, we fix the configuration by Vaswani {\em et al.}~\cite{vaswani2017attention} for NMT and the configuration by Dai {\em et al.}~\cite{dai2019transformer} for SP.

We first observe that the linear kernel does not converge for both NMT and SP. 
We argue the reason is that the linear kernel may have negative value and thus it violates the assumption in kernel smoother that the kernel score must be positive~\cite{wasserman2006all}. Next, we observe the kernel with infinite feature space (i.e., exponential and rbf kernel) outperforms the kernel with finite feature space (i.e., polynomial kernel). And we see rbf kernel performs the best for NMT and exponential kernel performs the best for SP. We conclude that the choice of kernel matters for the design of attention in Transformer. Also, we see no much performance difference when comparing asymmetric to symmetric kernel. In the experiment, we fix the size of $W_\cdot$ in the kernel, and thus adopting the symmetric kernel benefits us from saving parameters.

\subsection{Order-Invariance in Attention}

The need of the positional embedding (PE) in the attention mechanism is based on the argument that the attention mechanism is an order-agnostic (or, permutation equivariant) operation~\cite{vaswani2017attention,shaw2018self,huang2018music,dai2019transformer,child2019generating}. However, we show that, for decoder self-attention, the operation is not order-agnostic. For clarification, we are not attacking the claim made by the prior work~\cite{vaswani2017attention,shaw2018self,huang2018music,dai2019transformer,child2019generating}, but we aim at providing a new look at the order-invariance problem when considering the attention mechanism with masks (masks refer to the set filtering function in our kernel formulation). In other words, previous work did not consider the mask between queries and keys when discussing the order-invariance problem~\cite{perez2019turing}.

To put it formally, we first present the definition by~\citet{lee2018set} for a permutation equivariance function:
\begin{definition}
Denote $\Pi$ as the set of all permutations over $[n]=\{1,\cdots,n\}$. A function $func: \mathcal{X}^n\rightarrow \mathcal{Y}^n$ is permutation equivariant iff for any permutation $\pi \in \Pi$, $func(\pi x) = \pi func(x)$.
\end{definition}
\citet{lee2018set} showed that the standard attention (encoder self-attention~\cite{vaswani2017attention,dai2019transformer} ) is permutation equivariant. Here, we present the non-permutation-equivariant problem on the decoder self-attention:
\begin{proposition}
Decoder self-attention~\cite{vaswani2017attention,dai2019transformer} is not permutation equivariant.
\label{prop:decoder}
\end{proposition}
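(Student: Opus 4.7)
The plan is to prove Proposition~\ref{prop:decoder} by exhibiting an explicit counterexample, exploiting the fact that the set filtering function $M(\cdot, \cdot)$ used in decoder self-attention depends on the \emph{position} of the query inside the sequence. Concretely, for a query at position $i$, the mask returns only those keys at positions $j \le i$. A permutation that reorders the sequence therefore changes which keys a given token attends to, and this is what breaks equivariance; the kernel and value function themselves are auxiliary to the obstruction.

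First, I would take a minimal working example: a sequence of length two, $\mathbf{x}_q = \mathbf{x}_k = [x_1, x_2]$, and compute both coordinates of the decoder self-attention output using~\eqref{eq:attn_set}. At position $1$, the mask gives $M(x_1, S_{\mathbf{x}_k}) = \{x_1\}$, so the output collapses to $v(x_1)$. At position $2$, $M(x_2, S_{\mathbf{x}_k}) = \{x_1, x_2\}$, so the output is a convex combination
\begin{equation*}
\alpha\, v(x_1) + (1-\alpha)\, v(x_2), \qquad \alpha = \frac{k(x_2, x_1)}{k(x_2, x_1) + k(x_2, x_2)} \in (0,1),
\end{equation*}
where $\alpha \in (0,1)$ since the kernel $k$ is strictly positive. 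Next, I would apply the transposition $\pi$ swapping positions $1$ and $2$. Running decoder self-attention on $\pi\mathbf{x} = [x_2, x_1]$ yields $v(x_2)$ at position $1$ and $\beta v(x_2) + (1-\beta) v(x_1)$ at position $2$ for some $\beta \in (0,1)$.

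Permutation equivariance would require $\pi \,\mathrm{func}(\mathbf{x}) = \mathrm{func}(\pi\mathbf{x})$; comparing the first coordinates forces
\begin{equation*}
\alpha\, v(x_1) + (1-\alpha)\, v(x_2) \;=\; v(x_2),
\end{equation*}
which can only hold if $\alpha = 0$ or $v(x_1) = v(x_2)$. Choosing any $x_1, x_2$ with $v(x_1) \neq v(x_2)$ (e.g., any generic inputs under the learned linear value map $v(x_k) = f_k W_v$) contradicts this, proving the claim. The only subtlety to address is the degenerate regime: I would explicitly observe that $\alpha \in (0,1)$ whenever $k$ is strictly positive (as is the case for all kernels considered in Section~\ref{subsubsec:pos}), so the inequality is strict; and I would note that the argument extends to the Transformer-XL and Sparse Transformer variants, since their set filtering functions are also position-dependent. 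The main conceptual obstacle is merely isolating the right ingredient, namely the mask, from the other components of attention; once the mask is pinpointed, the counterexample is forced.
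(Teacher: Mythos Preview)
Your argument is correct. It differs from the paper's route in presentation rather than in underlying idea: both pin the failure on the position-dependent mask $M(\cdot,\cdot)$, but the paper factors the claim through an auxiliary lemma (its Proposition~\ref{prop:notequi}) stating that \emph{any} attention whose set filtering function returns a proper subset of $S_{\mathbf{x}_k}$ fails to be permutation equivariant with respect to $S_{\mathbf{x}_k}$, and then observes that decoder self-attention is an instance. Your explicit $n=2$ counterexample is more elementary and self-contained, and it has the virtue of handling the degenerate cases carefully: you note that $\alpha\in(0,1)$ because $k$ is strictly positive and you explicitly require $v(x_1)\neq v(x_2)$, whereas the paper's proof of Proposition~\ref{prop:notequi} simply asserts that ``\eqref{eq:attn_set} changes after this permutation'' without ruling out cancellation. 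Conversely, the paper's modular decomposition buys generality---Proposition~\ref{prop:notequi} immediately covers the Transformer-XL and Sparse Transformer variants you mention at the end, since their masks are also proper subsets---whereas your argument handles those by remarking that the same counterexample goes through. Either approach is adequate for the stated proposition.
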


To proceed the proof, we need the following definition and propositions.

\begin{definition}
Denote $\Pi$ as the set of all permutations over $[n]=\{1,\cdots,n\}$ and $S_{\mathbf{x}_k}^{\pi}$ as performing permutation $\pi$ over $S_{\mathbf{x}_k}$. $\mathrm{Attention}(x_q; S_{\mathbf{x}_k})$ is said to be permutation equivariant w.r.t. $S_{\mathbf{x}_k}$ if and only if for any $\pi \in \Pi$, $\mathrm{Attention}(x_q; S_{\mathbf{x}_k}^{\pi}) = \mathrm{Attention}(x_q; S_{\mathbf{x}_k})$.
\label{def:permu}
\end{definition}

\begin{proposition}
Attention with the set filtering function $M(x_q, S_{\mathbf{x}_k}) = S_{\mathbf{x}_k}$ is permutation equivariant w.r.t. $S_{\mathbf{x}_k}$.
\end{proposition}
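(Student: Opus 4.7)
The plan is to unpack Definition~\ref{def:attn} in the special case where the set filtering function returns the entire key set, and observe that the resulting expression is manifestly invariant under reordering of $S_{\mathbf{x}_k}$, because summation over a finite set is independent of the order in which the terms are enumerated.

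Concretely, I would first substitute $M(x_q, S_{\mathbf{x}_k}) = S_{\mathbf{x}_k}$ into \eqref{eq:attn_set} to obtain
\begin{equation*}
\mathrm{Attention}(x_q \,;\, S_{\mathbf{x}_k}) \,=\, \sum_{x_k \in S_{\mathbf{x}_k}} \frac{k(x_q, x_k)}{\sum_{x_k' \in S_{\mathbf{x}_k}} k(x_q, x_k')} \, v(x_k),
\end{equation*}
and then fix an arbitrary permutation $\pi \in \Pi$. Writing out $\mathrm{Attention}(x_q \,;\, S_{\mathbf{x}_k}^\pi)$, the outer sum and the normalizing inner sum are now indexed by $x_k \in S_{\mathbf{x}_k}^\pi$ and $x_k' \in S_{\mathbf{x}_k}^\pi$ respectively. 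The key observation is that $S_{\mathbf{x}_k}^\pi$ contains exactly the same elements as $S_{\mathbf{x}_k}$; only the bookkeeping order of enumeration differs. Applying the commutativity and associativity of addition in $\mathbb{R}$, both the numerator sum $\sum_{x_k} k(x_q, x_k) v(x_k)$ and the denominator sum $\sum_{x_k'} k(x_q, x_k')$ are unchanged, giving $\mathrm{Attention}(x_q \,;\, S_{\mathbf{x}_k}^\pi) = \mathrm{Attention}(x_q \,;\, S_{\mathbf{x}_k})$ as required by Definition~\ref{def:permu}.

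There is essentially no hard step here, but the one thing I would be careful about is the distinction between the set $S_{\mathbf{x}_k}$ and the underlying ordered sequence $\mathbf{x}_k$: the claim relies crucially on $M$ returning the \emph{set} of keys (so that no positional index survives inside the summation range), and on $k(x_q, \cdot)$ and $v(\cdot)$ depending only on the individual key, not on its position in $\mathbf{x}_k$. Once that is made explicit, the proof reduces to the elementary fact that a sum over a finite index set is invariant under relabeling of the index, which is exactly why this case stands in contrast to the decoder self-attention setting of Proposition~\ref{prop:decoder}, where $M$ depends on the positional ordering and thus breaks the symmetry.
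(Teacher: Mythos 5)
Your proof is correct and follows essentially the same route as the paper, which simply asserts that \eqref{eq:attn_set} is unchanged under any permutation of $S_{\mathbf{x}_k}$ when $M(x_q, S_{\mathbf{x}_k}) = S_{\mathbf{x}_k}$; you merely spell out the details (invariance of finite sums under relabeling, and the fact that $k(x_q,\cdot)$ and $v(\cdot)$ see only the key, not its position). No further changes are needed.
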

\begin{proof}
It is easy to show that if $M(x_q, S_{\mathbf{x}_k}) = S_{\mathbf{x}_k}$,~\eqref{eq:attn_set} remains unchanged for any permutation $\pi$ performed on $S_{\mathbf{x}_k}$.
\end{proof}

\begin{proposition}
Attention with the set difference $S_{\mathbf{x}_k} \setminus M(x_q, S_{\mathbf{x}_k}) \neq \phi$ is not permutation equivariant w.r.t. $S_{\mathbf{x}_k}$.
\label{prop:notequi}
\end{proposition}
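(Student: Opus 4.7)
The plan is to exhibit an explicit permutation $\pi\in\Pi$ together with a choice of inputs under which the attention output changes, thereby contradicting the equality required by Definition~\ref{def:permu}. Concretely, since $S_{\mathbf{x}_k}\setminus M(x_q,S_{\mathbf{x}_k})\neq\phi$, I would first pick an element $x_k^{\star}\in S_{\mathbf{x}_k}\setminus M(x_q,S_{\mathbf{x}_k})$ that is filtered out, together with (assuming the filter is non-empty, since otherwise the attention output is undefined and the statement is vacuous) an element $x_k'\in M(x_q,S_{\mathbf{x}_k})$ that survives. The counterexample is then built around the transposition $\pi$ that swaps the sequence positions of $x_k^{\star}$ and $x_k'$ and fixes every other index.

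The filtering rules used throughout Section~\ref{sec:method} (the triangular mask of \citet{vaswani2017attention}, the memory-augmented mask of \citet{dai2019transformer}, and the strided mask of \citet{child2019generating}) are all position-indexed, so after the swap the filter yields
\[ M(x_q,S_{\mathbf{x}_k}^{\pi})=\bigl(M(x_q,S_{\mathbf{x}_k})\setminus\{x_k'\}\bigr)\cup\{x_k^{\star}\}. \]
Substituting both filtered sets into \eqref{eq:attn_set}, the two attention outputs share the kernel-weighted contributions from every key other than $x_k'$ and $x_k^{\star}$, but the $x_k'$-term is replaced by an $x_k^{\star}$-term in both numerator and denominator. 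Choosing features so that $k(x_q,x_k')\,v(x_k')\neq k(x_q,x_k^{\star})\,v(x_k^{\star})$ --- for instance by picking two keys with distinct non-positional features --- makes the two outputs differ, establishing non-equivariance in the sense of Definition~\ref{def:permu}.

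The main obstacle I anticipate is pinning down the correct reading of the proposition: permutation equivariance fails not for every $\pi$ but precisely for transpositions that cross the filter boundary, and the non-emptiness of the set difference is exactly what guarantees that such a boundary-crossing $\pi$ exists. A secondary subtlety is ruling out the pathological instantiations of $v$ and $k$ under which the two outputs happen to coincide; this is handled by observing that with the learned maps $W_q,W_k,W_v$ (or any generic feature assignment) the product $k(x_q,\cdot)\,v(\cdot)$ separates distinct keys almost everywhere, so a concrete counterexample always exists. Once these two points are fixed, the remainder of the argument is a direct substitution into \eqref{eq:attn_set} and requires no further machinery.
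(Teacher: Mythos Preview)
Your proposal is correct and follows the same strategy as the paper: pick an element $\hat{x}$ excluded by the filter, construct a permutation that moves it into $M(x_q,S_{\mathbf{x}_k}^{\pi})$, and conclude that the output of~\eqref{eq:attn_set} changes. Your version is actually more careful than the paper's own proof, which simply asserts the change is ``obvious'' without your explicit transposition, your remark that the filter must be non-empty, or your discussion of the genericity of $k$ and $v$ needed to rule out accidental coincidences.
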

\begin{proof}
First, suppose that $\hat{x} \in S_{\mathbf{x}_k} \setminus M(x_q, S_{\mathbf{x}_k})$. Then, we construct a permutation $\pi$ such that $\hat{x} \in M(x_q, S_{\mathbf{x}_k}^\pi)$. It is obvious that~\eqref{eq:attn_set} changes after this permutation and thus $\mathrm{Attention}\Big(x_q \,\,;\,\, M(x_q, S_{\mathbf{x}_k})\Big)$ is not permutation equivariant w.r.t. $S_{\mathbf{x}_k}$.
\end{proof}

\begin{proof}{[Proof for Proposition~\ref{prop:decoder}]}
First, we have $x_q\sim S_{\mathbf{x}_k}$. Hence, showing $\mathrm{Attention}(x_q; S_{\mathbf{x}_k})$ not permutation equivariant w.r.t. $S_{\mathbf{x}_k}$ equals to showing $\mathrm{Attention}$ not permutation equivariant. Then, since the decoder self-attention considers masking (i.e., $M(x_q, S_{\mathbf{x}_k})$ returns a subset of $S_{\mathbf{x}_k}$), by Proposition~\ref{prop:notequi}, the decoder self-attention is not permutation equivariant.
\end{proof}

In fact, not only being a permutation inequivariant process, the decoding process in the decoder self-attention already implies the order information from the data. To show this, take the decoded sequence $\mathbf{y} = [\mathrm{init}, y_1, y_2, y_3, y_4]$ as an example. $\mathrm{init}$ stands for the initial token. When determining the output $y_1$ from $\mathrm{init}$, the set filtering function is $M(\mathrm{init}, S_\mathbf{y}) = \{\mathrm{init}\}$. Similarly, we will have $M(y_1, S_\mathbf{y}), M(y_2, S_\mathbf{y}), M(y_3, S_\mathbf{y})$ to be $\{\mathrm{init}, y_1\},  \{\mathrm{init}, y_1, y_2\},  \{\mathrm{init}, y_1, y_2, y_3\}$. Then, it raises a concern: do we require PE in decoder self-attention? By removing PE in decoder self-attention, we present the results in Table~\ref{tbl:3}. From the table, we can see that, for NMT, removing PE only in decoder self-attention results in slight performance drop (from $34.71$ to $34.49$). However, removing PE in the entire model greatly degrades the performance (from $34.71$ to $14.47$). On the other hand, for SP, removing PE from our proposed attention variant dramatically degrades the performance (from $24.28$ to $30.92$). Nonetheless, the performance is slightly better than considering PE from the original Transformer~\cite{vaswani2017attention}. 

\subsection{Positional Embedding in Value Function}
To determine the need of positional embedding (PE) in value function, we conduct the experiments by adopting~\eqref{eq:v1} or~\eqref{eq:v2} in the attention mechanism. The results are presented in Table~\ref{tbl:4}. From the table, we find that considering PE in value function (\eqref{eq:v1}) does not gain performance as compared to not considering PE in value function (\eqref{eq:v2}). 

\subsection{Take-Home Messages}
Based on the results and discussions, we can now answer the questions given at the beginning of this section. The answers are summarized into the take-home messages in the following.

\vspace{1mm}
\noindent {\bf A1.} We show that integrating the positional embedding in the form of product kernel (\eqref{eq:k2} or~\eqref{eq:k4}) gives us best performance.

\vspace{1mm}
\noindent {\bf A2.} The kernel form does matter. Adopting kernel form with infinite feature dimension (i.e., exponential kernel or rbf kernel) gives us best results. The symmetric design of the kernel may benefit us from saving parameters and barely sacrifice the performance as compared to the non-symmetric one.

\vspace{1mm}
\noindent {\bf A3.} The decoder self-attention is not an order-agnostic operation with respect to the order of inputs. However, incorporating positional embedding into the attention mechanism may still improve performance.

\vspace{1mm}
\noindent {\bf A4.} We find that there is no much performance difference by considering or not considering the positional embedding in value function.

\section{Related Work}

%Despite the success of Transformer in various applications~\cite{vaswani2017attention,devlin2018bert,dai2019transformer,child2019generating,wang2018non,huang2018improved,tsai2019multimodal}, the literature on interpreting or understanding Transformer is limited. In the following, we aim at providing an overview of the related work.

%As a paper focusing on practical use of Transformer, Popel {\em et al.}~\cite{popel2018training} presented the training tips for Transformers by examining some of its critical parameters (i.e., batch size, learning rate, warm up step, etc.) that will affect the training stability, memory usage, training time, and evaluation quality. Instead of being a paper discussing parameters space, our paper discusses the formulation of the Transformer's attention mechanism. On the other hand, as a theoretical paper, Perez {\em et al.}~\cite{perez2019turing} provided an analysis of the Turing completeness for Transformer. With a completely orthogonal research direction, our paper focuses on analyzing Transformers' attention mechanism via the lens of kernel methods~\cite{scholkopf2002learning}.
Other than relating Transformer's attention mechanism with kernel methods, the prior work~\cite{wang2018non,shaw2018self,tsai2019video} related the attention mechanism with graph-structured learning. For example, Non-Local Neural Networks~\cite{wang2018non} made a connection between the attention and the non-local operation in image processing~\cite{buades2005non}. Others~\cite{shaw2018self,tsai2019video} linked the attention to the message passing in graphical models. In addition to the fundamental difference between graph-structured learning and kernel learning, the prior work~\cite{wang2018non,shaw2018self,tsai2019video} focused on presenting Transformer for its particular application (e.g., video classification~\cite{wang2018non} and neural machine translation~\cite{shaw2018self}). Alternatively, our work focuses on presenting a new formulation of Transformer's attention mechanism that gains us the possibility for understanding the attention mechanism better.

\section{Conclusions}

In this paper, we presented a kernel formulation for the attention mechanism in Transformer, which allows us to define a larger space for designing attention. As an example, we proposed a new variant of attention which reaches competitive performance when compared to previous state-of-the-art models. Via the lens of the kernel, we were able to better understand the role of individual components in Transformer's attention and categorize previous attention variants in a unified formulation. Among these components, we found the construction of the kernel function acts the most important role, and we studied different kernel forms and the ways to integrate positional embedding on neural machine translation and sequence prediction. We hope our empirical study may potentially allow others to design better attention mechanisms given their particular applications.

%\section{Future Direction}
%In the experiments, we show the kernel form does matter for Transformer's attention mechanism. As different kernel indicates the varying scale of similarity measurement, we would like to examine the choice of heavy-tailed kernels as they may be less sensitive for measuring the similarity between inputs from two very different domains. Take neural machine translation for example, since Transformer requires performing attention mechanism across two different languages, we are interested in seeing how heavy-tailed kernels may help in this case. 

\section*{Acknowledgments}
We thank Zhilin Yang for helpful discussion on the positional encoding in Transformer's Attention. This work was supported in part by the DARPA grant FA875018C0150, Office of Naval Research grant N000141812861, AFRL CogDeCON, NSF Awards \#1734868 \#1722822, National Institutes of Health, JST PRESTO program JPMJPR165A, and Apple. We would also like to acknowledge NVIDIA’s GPU support. 

\newpage
{
\fontsize{5}{6}\selectfont
\bibliography{emnlp-ijcnlp-2019}

\begin{thebibliography}{26}
\expandafter\ifx\csname natexlab\endcsname\relax\def\natexlab#1{#1}\fi

\bibitem[{Bai et~al.(2018)Bai, Kolter, and Koltun}]{bai2018empirical}
Shaojie Bai, J~Zico Kolter, and Vladlen Koltun. 2018.
\newblock An empirical evaluation of generic convolutional and recurrent
  networks for sequence modeling.
\newblock \emph{arXiv preprint arXiv:1803.01271}.

\bibitem[{Buades et~al.(2005)Buades, Coll, and Morel}]{buades2005non}
Antoni Buades, Bartomeu Coll, and J-M Morel. 2005.
\newblock A non-local algorithm for image denoising.
\newblock In \emph{2005 IEEE Computer Society Conference on Computer Vision and
  Pattern Recognition (CVPR'05)}, volume~2, pages 60--65. IEEE.

\bibitem[{Child et~al.(2019)Child, Gray, Radford, and
  Sutskever}]{child2019generating}
Rewon Child, Scott Gray, Alec Radford, and Ilya Sutskever. 2019.
\newblock Generating long sequences with sparse transformers.
\newblock \emph{arXiv preprint arXiv:1904.10509}.

\bibitem[{Dai et~al.(2019)Dai, Yang, Yang, Cohen, Carbonell, Le, and
  Salakhutdinov}]{dai2019transformer}
Zihang Dai, Zhilin Yang, Yiming Yang, William~W Cohen, Jaime Carbonell, Quoc~V
  Le, and Ruslan Salakhutdinov. 2019.
\newblock Transformer-xl: Attentive language models beyond a fixed-length
  context.
\newblock \emph{arXiv preprint arXiv:1901.02860}.

\bibitem[{Devlin et~al.(2018)Devlin, Chang, Lee, and
  Toutanova}]{devlin2018bert}
Jacob Devlin, Ming-Wei Chang, Kenton Lee, and Kristina Toutanova. 2018.
\newblock Bert: Pre-training of deep bidirectional transformers for language
  understanding.
\newblock \emph{arXiv preprint arXiv:1810.04805}.

\bibitem[{Edunov et~al.(2017)Edunov, Ott, Auli, Grangier, and
  Ranzato}]{edunov2017classical}
Sergey Edunov, Myle Ott, Michael Auli, David Grangier, and Marc'Aurelio
  Ranzato. 2017.
\newblock Classical structured prediction losses for sequence to sequence
  learning.
\newblock \emph{arXiv preprint arXiv:1711.04956}.

\bibitem[{Huang et~al.(2018{\natexlab{a}})Huang, Vaswani, Uszkoreit, Shazeer,
  Hawthorne, Dai, Hoffman, and Eck}]{huang2018improved}
Cheng-Zhi~Anna Huang, Ashish Vaswani, Jakob Uszkoreit, Noam Shazeer, Curtis
  Hawthorne, Andrew~M Dai, Matthew~D Hoffman, and Douglas Eck.
  2018{\natexlab{a}}.
\newblock An improved relative self-attention mechanism for transformer with
  application to music generation.
\newblock \emph{arXiv preprint arXiv:1809.04281}.

\bibitem[{Huang et~al.(2018{\natexlab{b}})Huang, Vaswani, Uszkoreit, Simon,
  Hawthorne, Shazeer, Dai, Hoffman, Dinculescu, and Eck}]{huang2018music}
Cheng-Zhi~Anna Huang, Ashish Vaswani, Jakob Uszkoreit, Ian Simon, Curtis
  Hawthorne, Noam Shazeer, Andrew~M Dai, Matthew~D Hoffman, Monica Dinculescu,
  and Douglas Eck. 2018{\natexlab{b}}.
\newblock Music transformer: Generating music with long-term structure.

\bibitem[{Kulis et~al.(2011)Kulis, Saenko, and Darrell}]{kulis2011you}
Brian Kulis, Kate Saenko, and Trevor Darrell. 2011.
\newblock What you saw is not what you get: Domain adaptation using asymmetric
  kernel transforms.
\newblock In \emph{CVPR 2011}, pages 1785--1792. IEEE.

\bibitem[{Lee et~al.(2018)Lee, Lee, Kim, Kosiorek, Choi, and Teh}]{lee2018set}
Juho Lee, Yoonho Lee, Jungtaek Kim, Adam~R Kosiorek, Seungjin Choi, and
  Yee~Whye Teh. 2018.
\newblock Set transformer.
\newblock \emph{arXiv preprint arXiv:1810.00825}.

\bibitem[{Li et~al.(2017)Li, Chang, Cheng, Yang, and P{\'o}czos}]{li2017mmd}
Chun-Liang Li, Wei-Cheng Chang, Yu~Cheng, Yiming Yang, and Barnab{\'a}s
  P{\'o}czos. 2017.
\newblock Mmd gan: Towards deeper understanding of moment matching network.
\newblock In \emph{Advances in Neural Information Processing Systems}, pages
  2203--2213.

\bibitem[{Merity et~al.(2016)Merity, Xiong, Bradbury, and
  Socher}]{merity2016pointer}
Stephen Merity, Caiming Xiong, James Bradbury, and Richard Socher. 2016.
\newblock Pointer sentinel mixture models.
\newblock \emph{arXiv preprint arXiv:1609.07843}.

\bibitem[{Ott et~al.(2019)Ott, Edunov, Baevski, Fan, Gross, Ng, Grangier, and
  Auli}]{ott2019fairseq}
Myle Ott, Sergey Edunov, Alexei Baevski, Angela Fan, Sam Gross, Nathan Ng,
  David Grangier, and Michael Auli. 2019.
\newblock fairseq: A fast, extensible toolkit for sequence modeling.
\newblock In \emph{Proceedings of NAACL-HLT 2019: Demonstrations}.

\bibitem[{Parmar et~al.(2018)Parmar, Vaswani, Uszkoreit, Kaiser, Shazeer, Ku,
  and Tran}]{parmar2018image}
Niki Parmar, Ashish Vaswani, Jakob Uszkoreit, {\L}ukasz Kaiser, Noam Shazeer,
  Alexander Ku, and Dustin Tran. 2018.
\newblock Image transformer.
\newblock \emph{arXiv preprint arXiv:1802.05751}.

\bibitem[{P{\'e}rez et~al.(2019)P{\'e}rez, Marinkovi{\'c}, and
  Barcel{\'o}}]{perez2019turing}
Jorge P{\'e}rez, Javier Marinkovi{\'c}, and Pablo Barcel{\'o}. 2019.
\newblock On the turing completeness of modern neural network architectures.
\newblock \emph{arXiv preprint arXiv:1901.03429}.

\bibitem[{Scholkopf and Smola(2001)}]{scholkopf2002learning}
Bernhard Scholkopf and Alexander~J Smola. 2001.
\newblock \emph{Learning with kernels: support vector machines, regularization,
  optimization, and beyond}.
\newblock MIT press.

\bibitem[{Shaw et~al.(2018)Shaw, Uszkoreit, and Vaswani}]{shaw2018self}
Peter Shaw, Jakob Uszkoreit, and Ashish Vaswani. 2018.
\newblock Self-attention with relative position representations.
\newblock \emph{arXiv preprint arXiv:1803.02155}.

\bibitem[{Sutskever et~al.(2014)Sutskever, Vinyals, and
  Le}]{sutskever2014sequence}
Ilya Sutskever, Oriol Vinyals, and Quoc~V Le. 2014.
\newblock Sequence to sequence learning with neural networks.
\newblock In \emph{Advances in neural information processing systems}, pages
  3104--3112.

\bibitem[{Tsai et~al.(2019{\natexlab{a}})Tsai, Bai, Liang, Morency, and
  Salakhutdinov}]{tsai2019multimodal}
Yao-Hung~Hubert Tsai, Shaojie Bai, Paul~Pu Liang, Louis-Philippe Morency, and
  Ruslan Salakhutdinov. 2019{\natexlab{a}}.
\newblock Multimodal transformer for unaligned multimodal language sequences.
\newblock \emph{ACL}.

\bibitem[{Tsai et~al.(2019{\natexlab{b}})Tsai, Divvala, Morency, Salakhutdinov,
  and Farhadi}]{tsai2019video}
Yao-Hung~Hubert Tsai, Santosh Divvala, Louis-Philippe Morency, Ruslan
  Salakhutdinov, and Ali Farhadi. 2019{\natexlab{b}}.
\newblock Video relationship reasoning using gated spatio-temporal energy
  graph.
\newblock \emph{CVPR}.

\bibitem[{Tsuda(1999)}]{tsuda1999support}
Koji Tsuda. 1999.
\newblock Support vector classifier with asymmetric kernel functions.
\newblock In \emph{in European Symposium on Artificial Neural Networks (ESANN}.
  Citeseer.

\bibitem[{Vaswani et~al.(2017)Vaswani, Shazeer, Parmar, Uszkoreit, Jones,
  Gomez, Kaiser, and Polosukhin}]{vaswani2017attention}
Ashish Vaswani, Noam Shazeer, Niki Parmar, Jakob Uszkoreit, Llion Jones,
  Aidan~N Gomez, {\L}ukasz Kaiser, and Illia Polosukhin. 2017.
\newblock Attention is all you need.
\newblock In \emph{Advances in Neural Information Processing Systems}, pages
  5998--6008.

\bibitem[{Wang et~al.(2018)Wang, Girshick, Gupta, and He}]{wang2018non}
Xiaolong Wang, Ross Girshick, Abhinav Gupta, and Kaiming He. 2018.
\newblock Non-local neural networks.
\newblock In \emph{Proceedings of the IEEE Conference on Computer Vision and
  Pattern Recognition}, pages 7794--7803.

\bibitem[{Wasserman(2006)}]{wasserman2006all}
Larry Wasserman. 2006.
\newblock \emph{All of nonparametric statistics}.
\newblock Springer Science \& Business Media.

\bibitem[{Wilson et~al.(2016)Wilson, Hu, Salakhutdinov, and
  Xing}]{wilson2016deep}
Andrew~Gordon Wilson, Zhiting Hu, Ruslan Salakhutdinov, and Eric~P Xing. 2016.
\newblock Deep kernel learning.
\newblock In \emph{Artificial Intelligence and Statistics}, pages 370--378.

\bibitem[{Yilmaz(2007)}]{yilmaz2007object}
Alper Yilmaz. 2007.
\newblock Object tracking by asymmetric kernel mean shift with automatic scale
  and orientation selection.
\newblock In \emph{2007 IEEE Conference on Computer Vision and Pattern
  Recognition}, pages 1--6. IEEE.

\end{thebibliography}
\bibliographystyle{acl_natbib}
}

\end{document}